\newtheorem{lemma}{Lemma}
\newcommand{\N}{\mathbb{N}}
\newcommand{\R}{\mathbb{R}}
\newcommand{\fu}{\mathfrak{u}}
\title{A recurrent neural network without chaos }
\author{Thomas Laurent  \\
Department of Mathematics\\
Loyola Marymount University\\
Los Angeles, CA 90045, USA \\
\texttt{tlaurent@lmu.edu} \\
\And
James von Brecht \\
Department of Mathematics\\
California State University, Long Beach\\
Long Beach, CA 90840, USA \\
\texttt{james.vonbrecht@csulb.edu} \\
}
\begin{document}

\maketitle

\begin{abstract}
We introduce an exceptionally simple  gated recurrent neural network (RNN)  that achieves performance comparable to well-known gated architectures, such as LSTMs and GRUs, on the word-level language modeling task. 
We prove that our model has simple, predicable and non-chaotic dynamics. 
This stands in stark contrast to more standard gated architectures, whose underlying dynamical systems exhibit chaotic behavior.
\end{abstract}

\section{Introduction}

Gated recurrent neural networks, such as the Long Short Term Memory network (LSTM)
introduced by \cite{hochreiter1997long} and the Gated Recurrent Unit (GRU) 
proposed by \cite{cho2014learning}, prove highly effective  for machine learning tasks that involve sequential data. We propose an exceptionally simple variant of these gated architectures. 
 The basic model takes the form
\begin{equation} \label{cfn1}
h_{t} = \theta_{t} \odot \tanh(h_{t-1} ) + \eta_{t} \odot \tanh(W x_{t} ) ,
\end{equation}
where $\odot$ stands for the Hadamard product. The horizontal/forget gate (i.e. $\theta_{t}$) and the vertical/input gate (i.e. $\eta_{t}$) take the usual form used in most gated RNN architectures. Specifically 
\begin{equation} \label{cfn2}
\theta_{t} := \sigma\left( U_{\theta} h_{t-1} + V_{\theta} x_{t} + b_{\theta} \right)
\quad \text{and} \quad
 \eta_{t} := \sigma\left( U_{\eta} h_{t-1} + V_{\eta} x_{t} + b_{\eta} \right) 
\end{equation}
where $\sigma(x):=(1 + \mathrm{e}^{-x} )^{-1}$ denotes the logistic sigmoid function.
The network \eqref{cfn1}--\eqref{cfn2} has quite intuitive dynamics. Suppose the data $x_{t}$ present the model with a sequence
\begin{equation}\label{eq:IR}
 (Wx_t)(i)= \begin{cases}
 10 &  \text{if $t =T$ } \\
 0 & \text{otherwise},
 \end{cases}
\end{equation}
where  $(Wx_t)(i)$  stands for the $i^{{\rm th} }$ component of the vector $Wx_t$. In other words we consider an input sequence $x_{t}$ for which the learned $i^{ {\rm th} }$ feature $(Wx_t)(i)$  remains off except at time $T$.
 When initialized from $h_0 = 0$, the corresponding response of the network to this ``impulse" in the $i^{ {\rm th} }$ feature is
 \begin{equation}
  h_t(i) \approx \begin{cases} \label{response}
0 & \text{if } t < T \\
\eta_T  & \text{if } t =T \\
\alpha_t  & \text{if } t >T \\
 \end{cases}
 \end{equation}
 with $\alpha_t$ a sequence that relaxes toward zero. The forget gate $\theta_t$   control the rate of this relaxation. Thus $h_{t}(i)$ activates when presented with a strong $i^{ {\rm th} }$ feature, and then relaxes toward zero until the data present the network once again with strong $i^{ {\rm th} }$ feature. Overall this leads to a dynamically simple model, in which the activation patterns in the hidden states of the network have a clear cause and predictable subsequent behavior.  
 
Dynamics of this sort do not occur in other RNN models. Instead, the three most popular recurrent neural network architectures, namely the vanilla RNN, the LSTM and the GRU, have complex, irregular, and unpredictable dynamics. Even in the absence of input data, these networks can give rise to chaotic dynamical systems. In other words, when presented with null input data the activation patterns in their hidden states do not necessarily follow a predictable path. The proposed network \eqref{cfn1}--\eqref{cfn2} has rather dull and minimalist dynamics in comparison; its only attractor is the zero state, and so it stands at the polar-opposite end of the spectrum from chaotic systems. Perhaps surprisingly, at least in the light of this comparison, the proposed network \eqref{cfn1} performs as well as LSTMs and GRUs on the word level language modeling task. We therefore conclude that the ability of an RNN to form chaotic temporal dynamics,  in the sense we describe in Section 2, cannot explain its success on word-level language modeling tasks.

In the next section, we review the phenomenon of chaos in RNNs via both synthetic examples and trained models.   We also prove a precise, quantified description of the dynamical picture \eqref{eq:IR}--\eqref{response} for the proposed network. In particular, we show that the dynamical system induced by the proposed network is never chaotic, and for this reason we refer to it as a Chaos-Free Network (CFN). The final section provides a series of experiments that demonstrate that  CFN achieve results comparable to LSTM on the word-level language modeling task. All together, these observations show that an architecture as simple as \eqref{cfn1}--\eqref{cfn2} can achieve performance comparable to the more dynamically complex LSTM.

\section{Chaos in Recurrent Neural Networks}

The study of RNNs from a  dynamical systems point-of-view has brought fruitful insights into generic features of RNNs \citep{sussillo2013opening, pascanu2013difficulty}. 
 We shall pursue a brief investigation of CFN, LSTM and GRU networks using this formalism, as it allows us to identify key distinctions between them. Recall that for a given   mapping $\Phi : \R^{d} \mapsto \R^{d},$ a given initial time $t_{0} \in \mathbb{N}$ and a given initial state $\fu_{0} \in \R^{d},$ a simple repeated iteration of the mapping $\Phi$
\begin{align*}
\fu_{t+1} &= \Phi( \fu_{t} ) \quad t > t_0,\\
\fu_{t_0} &= \fu_{0} \quad \quad \;\, t = t_0,
\end{align*}
defines a \emph{discrete-time dynamical system}. The index $t \in \N$ represents the current time, while the point $\fu_{t} \in \R^{d}$ represents the current state of the system. The set of all visited states
$
\mathcal{O}^{+}(\fu_0) := \{ \fu_{t_0} , \fu_{t_0+1} , \ldots , \fu_{t_0 + n } , \ldots \} 
$
defines the \emph{forward trajectory} or \emph{forward orbit} through $\fu_{0}$.
An {\it attractor} for the   dynamical system is a set that is invariant (any trajectory that starts in the set remains in the set) and that attracts all  trajectories that start sufficiently close to it.  The attractors of  chaotic dynamical systems  are often fractal sets, and for this reason they are referred to as {\it strange attractors}.

Most RNNs generically take the functional form
\begin{equation} \label{eq:extforce}
\fu_{t} = \Psi( \fu_{t-1} , W_{1} x_{t} , W_{2} x_{t} , \ldots , W_{k}x_{t} ),
\end{equation}
where $x_{t}$ denotes the $t^{ {\rm th}}$ input data point. For example, in the case of the CFN \eqref{cfn1}--\eqref{cfn2}, we have  $W_1=W$, $W_2=V_{\theta}$ and $W_3=V_{\eta}$. To gain insight into the underlying design of the architecture of an RNN, it proves usefull to consider how  trajectories  
behave when they are not influenced by any external input. This lead us to consider
 the dynamical system
\begin{equation}\label{eq:gendyn}
\fu_{t} = \Phi(\fu_{t-1}) \qquad \Phi( \fu ) := \Psi( \fu ,0 ,0 , \ldots ,0 ),
\end{equation}
which we refer to as the  {\it dynamical system induced} by the recurrent neural network. The time-invariant system  \eqref{eq:gendyn} is much more tractable than  \eqref{eq:extforce}, and it offers a mean to investigate the inner working of a given architecture; it separates the influence of input data $x_{t},$ which can produce essentially any possible response, from the model itself. Studying trajectories that are not influenced by external data will give us an indication on the ability 
of a given RNN to generate complex and sophisticated trajectories by its own. As we shall  see shortly, the dynamical system induced by a CFN has excessively simple and predictable trajectories: all of them  converge to the zero state. In other words, its only attractor is the zero state. This is in sharp contrast with the dynamical systems induced by LSTM or GRU, who can exhibit chaotic behaviors and have {\it  strange attractors}. 


The learned parameters $W_{j}$ in \eqref{eq:extforce} describe how data influence the evolution of hidden states at each time step. From a modeling perspective, \eqref{eq:gendyn} would occur in the scenario where a trained RNN has learned a weak coupling between a specific data point $x_{t_0}$ and the hidden state at that time, in the sense that the data influence is small and so all $W_{j} x_{t_0} \approx 0$ nearly vanish. The hidden state then transitions according to
$
\fu_{t_0}  \approx \Psi( \fu_{t_0-1} , 0, 0 , \ldots , 0) = \Phi(\fu_{t_0-1}).
$

 We refer to \cite{bertschinger2004real} for a study of the chaotic behavior of a simplified vanilla RNN with a specific statistical model, namely an i.i.d. Bernoulli process, for the input data as well as a specific statistical model, namely i.i.d. Gaussian, for the weights of the recurrence matrix. 

   \begin{figure}[t]
\centering
\subfigure{\includegraphics[height=1.7in,width=2.2in]{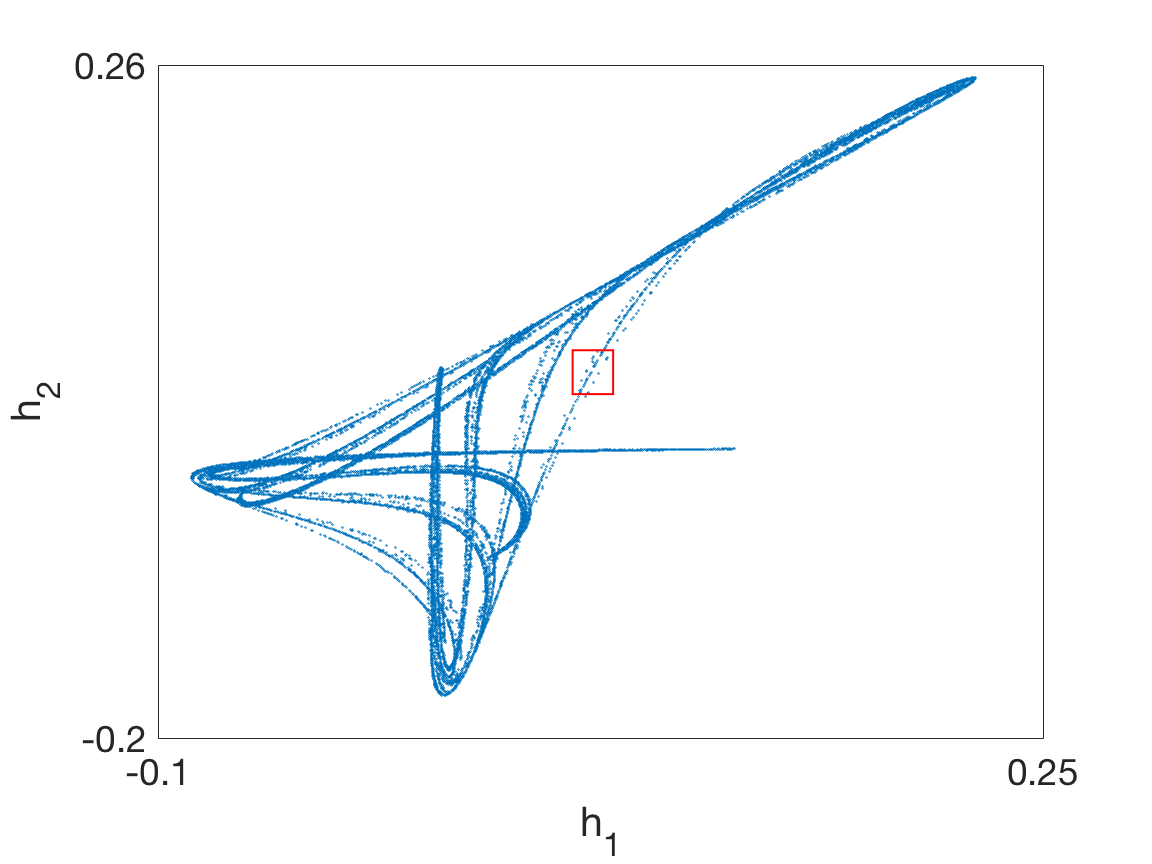}}
 \hspace{0.01in} 
 \subfigure{\includegraphics[height=1.5in,width=1.5in]{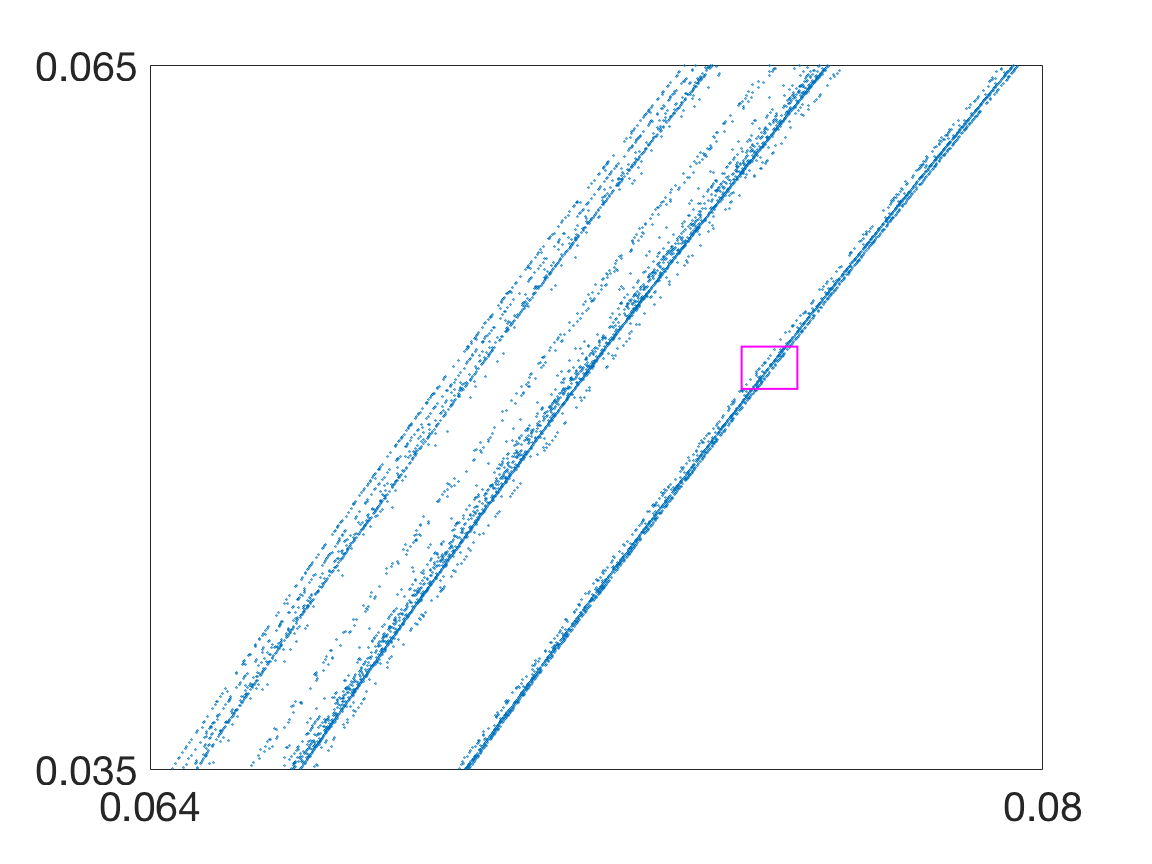}} 
\hspace{0.01in}
\subfigure{\includegraphics[height=1.5in,width=1.5in]{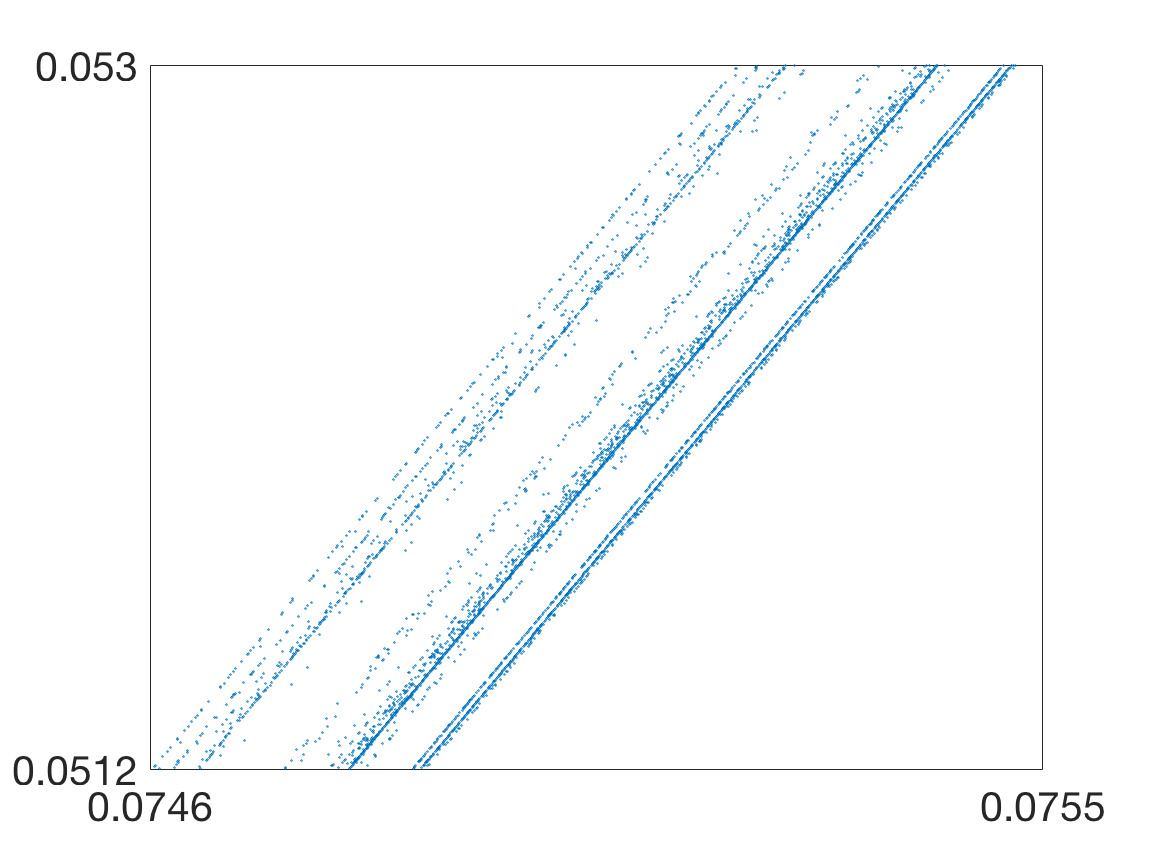}} 
\caption{Strange attractor of a 2-unit LSTM.
Successive zooms (from left to right) reveal the self-repeating, fractal nature of the attractor. Colored boxes depict zooming regions.
}
\label{fig:att}
\end{figure}


\subsection{Chaotic behavior of  LSTM and GRU in the absence of input data}
%

In this subsection we briefly show  that  LSTM and GRU, {\it in the absence of input data}, can lead to dynamical systems $\fu_t = \Phi(\fu_{t-1})$ that  are  chaotic in the classical sense of the term \citep{strogatz2014nonlinear}.
%
Figure \ref{fig:att} depicts the strange attractor of the dynamical system:
 \begin{align}\label{eq:lstms}
&\fu_{t} = \begin{bmatrix}
h_{t} \\ c_{t}
\end{bmatrix}
\quad \fu \mapsto \Phi(\fu) = \begin{bmatrix}
o \odot \tanh\left( f \odot c + i \odot g \right) \\
 f \odot c + i \odot g
\end{bmatrix}\\
&i := \sigma( W_{i} h + b_{i} ) \quad  f := \sigma( W_{f} h + b_{f} )  \quad o := \sigma( W_{o} h + b_{o} ) \quad g := \tanh( W_{g} h + b_{g} )  
\end{align}
induced by a two-unit LSTM with weight matrices
\begin{equation}
 W_{i} = 
\begin{bmatrix}
-1  & -4 \\
   -3 &  -2
\end{bmatrix}\quad
W_{o} =
\begin{bmatrix}
   \;\;\,4  & \;\;\,1\\
    -9   &-7
\end{bmatrix}\quad
W_{f} =
\begin{bmatrix}
   -2  &\;\;\, 6\\
    \;\;\,0   & -6
\end{bmatrix}\quad
W_{g} =
\begin{bmatrix}
   -1  & -6\\
   \;\;\, 6  &  -9
\end{bmatrix} \label{eq:lstms2}
\end{equation}
and zero bias for the model parameters. 
 These weights were randomly generated from a normal distribution with standard deviation 5 and then rounded to the nearest integer.  Figure  \ref{fig:att}(a) was obtained by choosing an initial state $\fu_0=(h_0,c_0)$ uniformly at random in $[0,1]^2\times [0,1]^2$  and plotting the h-component of the  iterates $\fu_t=(h_t,c_t)$ for $t$ between $10^3$ and $10^5$ (so this figure  should be regarded as a two dimensional projection of a four dimensional attractor, which explain  its tangled appearance). Most trajectories starting  in $[0,1]^2\times [0,1]^2$  converge toward the depicted attractor. The resemblance between this attractor and classical strange attractors such as the H\'enon attractor is striking (see Figure \ref{fig:attbis} in the  appendix  for a depiction of the H\'enon attractor).  Successive zooms on the branch of the LSTM attractor from Figure \ref{fig:att}(a)  reveal its fractal nature.
  Figure \ref{fig:att}(b) is an enlargement of the red box in Figure \ref{fig:att}(a),
 and Figure \ref{fig:att}(c) is an enlargement of the magenta box in Figure \ref{fig:att}(b).  We see that   the structure repeats itself as we zoom in.

The most practical consequence of chaos is that the long-term behavior of their forward orbits can exhibit a high degree of sensitivity to the initial states $\fu_{0}$.  Figure \ref{fig:fill} provides an example of such behavior for the dynamical system \eqref{eq:lstms}--\eqref{eq:lstms2}.
An initial condition $\fu_0$  was drawn  uniformly at random in $[0,1]^2\times [0,1]^2$. We then computed $100,000$ small amplitude perturbations $\hat{\fu}_0$ of $\fu_0$ by adding a small random number drawn uniformly from $[-10^{-7},10^{-7}]$ to each component. We then iterated \eqref{eq:lstms}--\eqref{eq:lstms2} for $200$ steps  and plotted the h-component of the final state $\hat\fu_{200}$ for each of the $100,000$ trials on Figure \ref{fig:fill}(a). The collection of these $100,000$ final states essentially fills out the entire attractor, despite the fact that their initial conditions are highly localized (i.e. at distance of no more than $10^{-7}$) around a fixed point. In other words, the time $t=200$ map of the dynamical system will map a small neighborhood around a fixed initial condition $\fu_0$ to the entire attractor. Figure \ref{fig:fill}(b) additionally illustrates this sensitivity to initial conditions for points on the attractor itself. We take an initial condition $\fu_0$ on the attractor  and perturb it by $10^{-7}$ to a nearby initial condition $\hat{\fu}_0$. We then plot the distance $\|\hat{\fu}_t-\fu_t\|$ between the two corresponding trajectories for the first 200 time steps.   After an initial phase of agreement, the  trajectories  strongly diverge.




\begin{figure}[t]
\centering
\subfigure[Final state $\hat\fu_{200}$ for $10^5$ trials]{\includegraphics[height=1.5in,width=1.9in]{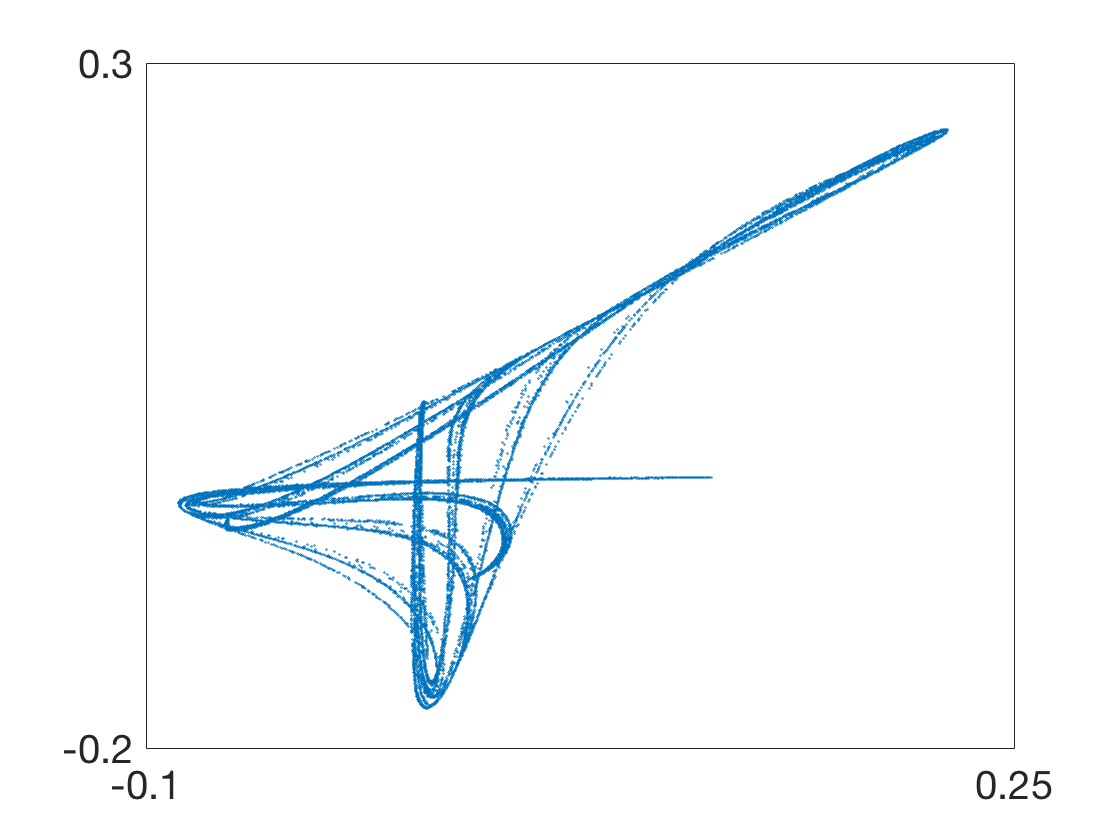}}
\subfigure[Distance $\|\hat{\fu}_t-\fu_t\|$ between 2 trajectories]{ \includegraphics[height=1.5in,width=2.9in]{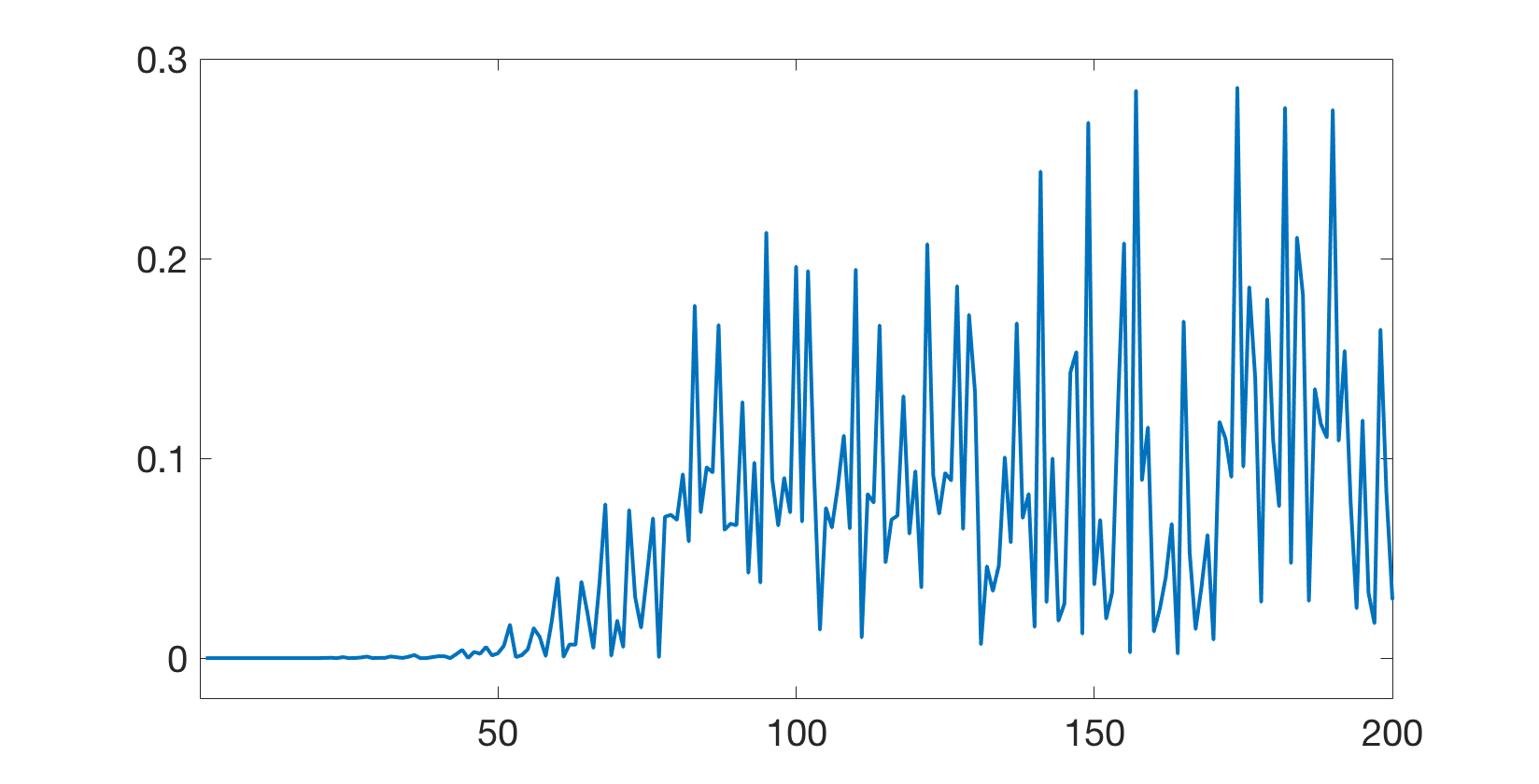}}
\caption{(a): A small neighborhood around a fixed initial condition $\fu_0$, after 200 iterations, is mapped to the entire attractor. (b): Two trajectories starting starting within $10^{-7}$ of one another strongly diverge after 50 steps. 
}
 \label{fig:fill}
\end{figure}

\begin{figure}[h]
\centering
\subfigure[No input data]{\includegraphics[height=1in,width=2.5in]{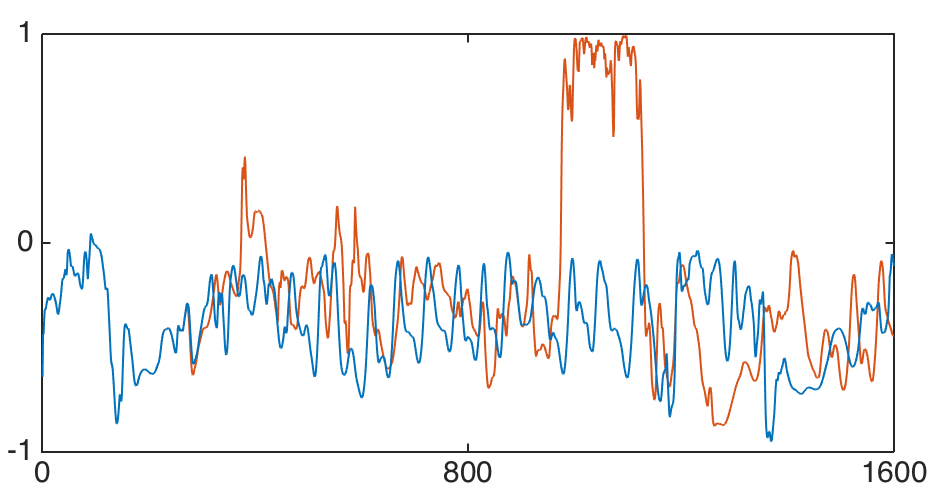}}
\subfigure[With input data]{ \includegraphics[height=1.01in,width=2.9in]{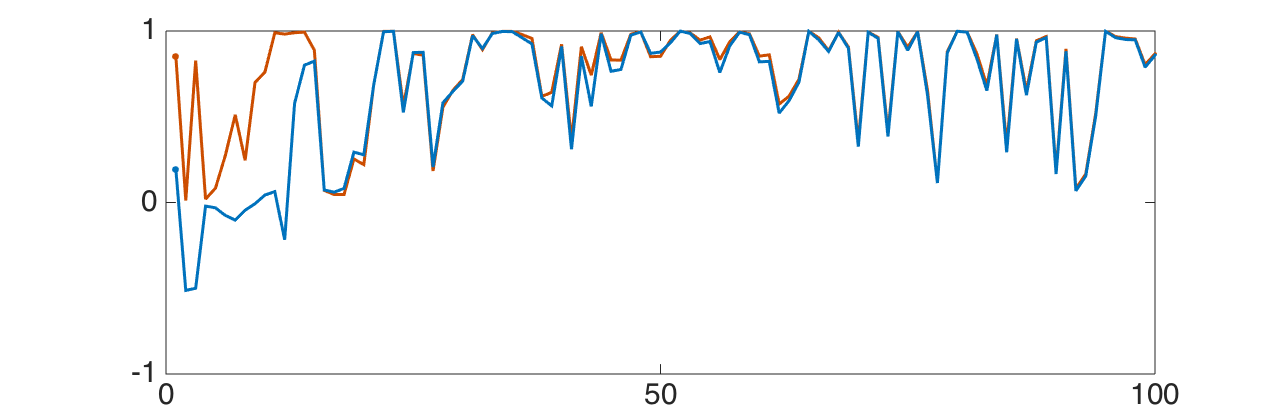}}
\caption{228-unit LSTM trained on Penn Treebank.  (a): In the absence of input data, the system is chaotic and nearby trajectories diverge.
  (b): In the presence of input data, the system is mostly driven by the external input. Trajectories starting far apart converge.}
 \label{fig:trained}
\end{figure}

The synthetic example \eqref{eq:lstms}--\eqref{eq:lstms2} illustrates the potentially chaotic nature of the LSTM architecture. We now show that chaotic behavior  occurs for \emph{trained} models as well, and not just for synthetically generated instances. We take the parameter values of an LSTM with $228$ hidden units trained on the Penn Treebank corpus without dropout (c.f. the experimental section for the precise procedure). We then set all data inputs $x_{t}$ to zero and run the corresponding induced dynamical system. Two trajectories starting from nearby initial conditions  
 $\fu_0$ and $\hat{\fu}_0$ were  computed  (as before $\hat{\fu}_0$ was obtained   by adding to each components of $\fu_0$ a small random number drawn uniformly from $[-10^{-7},10^{-7}]$).  Figure \ref{fig:trained}(a) plots the first component h(1) of the hidden state for both trajectories over the first 1600 time steps.
     After an initial phase of agreement, the forward trajectories $\mathcal{O}^+(\fu_0)$ and $\mathcal{O}^+(\hat{\fu}_0)$ strongly diverge. We also see that both trajectories exhibit the typical aperiodic behavior that characterizes chaotic systems. 
      If the inputs $x_{t}$ do not vanish, but come from actual word-level data, then the behavior is very different. The LSTM is now no longer an autonomous system whose dynamics are driven by its hidden states, but a time dependent system whose dynamics are mostly driven by the external inputs. Figure \ref{fig:trained}(b) shows the first component $h(1)$ of the hidden states of two trajectories that start with initial conditions  
 $\fu_0$ and $\hat{\fu}_0$ that are far apart. The sensitivity to initial condition disappears, and instead the trajectories converge toward each other after about $70$ steps. The memory of this initial difference is lost. Overall these experiments indicate that  a trained LSTM, when it is not driven by external inputs, can be chaotic. In the presence of input data, the LSTM  becomes a forced system whose dynamics are dominated by external forcing. 
%
 
%
%
     
 Like LSTM networks,  GRU can also lead to dynamical systems  that  are  chaotic and they can also  have strange attractors. 
 The depiction of such an attractor, in the case of a two-unit GRU, is provided in Figure \ref{fig:GRU} of the  appendix.


\subsection{Chaos-free behavior of the CFN}
The dynamical behavior of the CFN is dramatically different from that of the LSTM. In this subsection we start by showing that the hidden states of the CFN activate and relax toward zero in a predictable fashion in response to input data. On one hand, this shows that the CFN cannot produce non-trivial dynamics without some influence from data. On the other, this leads to an interpretable model; any non-trivial activations in the hidden states of a CFN have a clear cause emanating from data-driven activation. This follows from a precise, quantified description of the intuitive picture \eqref{eq:IR}--\eqref{response} sketched in the introduction.

We begin with the following simple estimate that  sheds light on  how the hidden states of the CFN activate and then relax toward the origin.
\begin{lemma} \label{estimate} For any $T,k>0$ we have
$$
|h_{T+k}(i)| \le   \Theta^k \;  |h_T(i)| +  \frac{H}{1-\Theta}  \left( \max_{T\le t \le T+k} \left|(Wx_t)(i)\right| \right)
$$
where $\Theta$ and $H$ are the maximum values of the $i^{ {\rm th} }$ components of the $\theta$ and $\eta$ gate in the time interval $[T,T+k]$, that is:
$$
 \Theta = \max_{T\le t \le T+k} \theta_t(i)  \quad \text{and} \quad   H = \max_{T\le t \le T+k} \eta_t(i).
$$
\end{lemma}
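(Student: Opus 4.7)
The plan is to reduce the componentwise recurrence for $h_t(i)$ to a scalar linear inequality and then iterate it, summing the resulting geometric series. Writing \eqref{cfn1} componentwise gives
\[
h_t(i) = \theta_t(i)\tanh(h_{t-1}(i)) + \eta_t(i)\tanh((Wx_t)(i)),
\]
so the first step is a one-step estimate. Since $\theta_t(i)$ and $\eta_t(i)$ are outputs of the logistic sigmoid in \eqref{cfn2} they lie in $(0,1)$, and since $|\tanh(y)|\le |y|$ for all real $y$, taking absolute values and using the triangle inequality yields
\[
|h_t(i)| \le \theta_t(i)\,|h_{t-1}(i)| + \eta_t(i)\,|(Wx_t)(i)|.
\]
For $T\le t\le T+k$ I would then replace $\theta_t(i)$ and $\eta_t(i)$ by their maxima $\Theta$ and $H$ on this window, and $|(Wx_t)(i)|$ by $M:=\max_{T\le t\le T+k}|(Wx_t)(i)|$, to obtain the scalar recurrence $|h_t(i)|\le \Theta\,|h_{t-1}(i)| + HM$.

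The second step is a straightforward induction on $k$. Iterating the one-step bound gives
\[
|h_{T+k}(i)| \le \Theta^k |h_T(i)| + HM\sum_{j=0}^{k-1}\Theta^j,
\]
and since $\Theta<1$ (again because $\Theta$ is an output of $\sigma$) the geometric sum is bounded by $1/(1-\Theta)$, producing the stated estimate. A clean way to present this is to introduce $a_j := |h_{T+j}(i)|$, note $a_{j}\le \Theta a_{j-1} + HM$, and verify by induction that $a_k \le \Theta^k a_0 + HM(1-\Theta^k)/(1-\Theta)$.

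There is no real obstacle here; the only points that deserve explicit mention are (i) the positivity of $\theta_t(i)$ and $\eta_t(i)$, which is what allows the triangle inequality to pass to the bound in terms of $\Theta$ and $H$ without sign issues, and (ii) the strict inequality $\Theta<1$, which is what makes $1/(1-\Theta)$ finite and lets us replace the partial geometric sum by its infinite counterpart. Both facts follow immediately from the definition $\sigma(x)=(1+e^{-x})^{-1}\in(0,1)$, so the lemma reduces to the one-line inequality $|\tanh(y)|\le |y|$ together with a geometric series.
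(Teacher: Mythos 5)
Your proposal is correct and follows essentially the same route as the paper's proof: the one-step bound via the non-expansivity of $\tanh$ and the triangle inequality, followed by iteration and summation of the geometric series, with $(1-\Theta^k)/(1-\Theta)$ bounded by $1/(1-\Theta)$. The only difference is that you spell out the positivity of the gates and the strict bound $\Theta<1$, which the paper leaves implicit.
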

This estimate shows that if during  a time interval $[T_1,T_2]$ one of
\begin{enumerate}
\item[(i)] the embedded inputs $Wx_t$ have weak $i^{{\rm th} }$ feature (i.e. $\max_{T\le t \le T+k} \left|(Wx_t)(i)\right|$  is small),
\item[(ii)]  or the input gates $\eta_t$ have their $i^{ {\rm th} }$ component close to zero (i.e. $H$ is small),
\end{enumerate}
occurs then the $i^{ {\rm th} }$ component of the hidden state $h_t$ will relaxes toward zero at a rate that depends on the value of the $i^{ {\rm th} }$ component the the forget gate.  Overall this leads to the following simple picture:  $h_{t}(i)$ activates when presented with an embedded input $Wx_t$ with strong $i^{ {\rm th} }$ feature, and then relaxes toward zero until the data present the network once again with strong $i^{{\rm th}}$ feature. 
The strength of the activation and the decay rate are controlled by the $i^{ {\rm th} }$ component of the input and forget gates. The proof of Lemma \ref{estimate} is elementary ---

\vspace{-.3cm}

\begin{proof}[Proof of Lemma \ref{estimate}]
Using the non-expansivity of the hyperbolic tangent, i.e. $|\tanh(x)| \leq |x|$, and the triangle inequality, we obtain from \eqref{cfn1}
$$
|h_{t}(i)| \leq \Theta  \; |h_{t-1}(i)| + H \;  \max_{T\le t \le T+k} \left|(Wx_t)(i)\right|
$$
whenever $t$ is in the interval $[T,T+k]$.  Iterating this inequality and summing the geometric series then gives
$$
|h_{T+k}(i)| \leq \Theta^k |h_{T}(i)| +  \left(\frac{1 - \Theta^k}{ 1-  \Theta }  \right)\;  H \;  \max_{T\le t \le T+k} \left|(Wx_t)(i)\right|
$$
from which we easily  conclude.
\end{proof}

We now turn toward the analysis of the long-term behavior of the the dynamical system
\begin{align}\label{eq:cfn}
\fu_t = h_t, \qquad &\fu \mapsto \Phi(\fu) :=  \sigma\left( U_{\theta} \fu + b_{\theta} \right) \odot  \tanh( \fu ).  
\end{align}
induced by a CFN. The following lemma shows  that the only attractor of this dynamical system is the zero state.
\begin{lemma} \label{zero}
Starting from any initial state $\fu_{0}$, the trajectory $\mathcal{O}^{+}(\fu_0)$ will eventually converge to the zero state. That is,  $\lim_{t \to +\infty} \fu_t=0$ regardless of the the initial state $ \fu_{0}$.
\end{lemma}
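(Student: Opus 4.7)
The plan is to prove convergence by tracking the sup-norm $M_t := \|\fu_t\|_\infty$ and combining monotonicity with a compactness/limit-point argument, rather than trying to get a uniform contraction rate from Lemma \ref{estimate} (the coefficient $\Theta$ there can approach $1$, so a naive application does not suffice).

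First I would establish that $M_t$ is monotone non-increasing. Writing the recursion componentwise,
\[
 |\fu_t(i)| \;=\; \sigma\bigl((U_\theta \fu_{t-1} + b_\theta)(i)\bigr)\,|\tanh(\fu_{t-1}(i))| \;\leq\; |\fu_{t-1}(i)|,
\]
since $\sigma\in(0,1)$ and $|\tanh(x)|\leq|x|$. Taking the max over $i$ gives $M_t\leq M_{t-1}$, so $(M_t)$ is non-increasing and bounded below by $0$; therefore $M_t\downarrow L$ for some $L\geq 0$, and the orbit stays inside the compact box $\{\fu:\|\fu\|_\infty\leq M_0\}$. The goal reduces to showing $L=0$.

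Next I would use compactness and continuity to turn this into an algebraic constraint at a limit point. Extract a subsequence $\fu_{t_k}\to \fu^*$; since $M_{t_k}\to L$ this forces $\|\fu^*\|_\infty = L$. Continuity of $\Phi$ gives $\fu_{t_k+1}=\Phi(\fu_{t_k})\to \Phi(\fu^*)$, and since $M_{t_k+1}\to L$ as well we also have $\|\Phi(\fu^*)\|_\infty = L$. Pick an index $i^*$ with $|\Phi(\fu^*)(i^*)|=L$, so that
\[
 L \;=\; \sigma\bigl((U_\theta \fu^*+b_\theta)(i^*)\bigr)\,|\tanh(\fu^*(i^*))|.
\]

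The heart of the argument is to show $L=0$ by contradiction. Suppose $L>0$. Then necessarily $\fu^*(i^*)\neq 0$ (otherwise the right-hand side would vanish), which makes $|\tanh(\fu^*(i^*))|>0$. Using the \emph{strict} bound $\sigma(\cdot)<1$ together with $|\tanh(\fu^*(i^*))|\leq|\fu^*(i^*)|\leq L$, the displayed equality gives
\[
 L \;<\; |\tanh(\fu^*(i^*))| \;\leq\; |\fu^*(i^*)| \;\leq\; L,
\]
a contradiction. Hence $L=0$, which means $\|\fu_t\|_\infty\to 0$ and so $\fu_t\to 0$ for every initial state $\fu_0$.

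The main subtlety I anticipate is precisely this last step: because the per-step contraction factor $\sigma((U_\theta\fu+b_\theta)(i))$ is not uniformly bounded away from $1$ on the whole box (it tends to $1$ as the pre-activation grows), one cannot conclude $M_t\to 0$ from a single multiplicative estimate. The limit-point trick bypasses this by exploiting that $\sigma<1$ \emph{strictly} at every finite point, which is enough to rule out a nonzero limit $L$. Everything else (monotonicity, boundedness, continuity of $\Phi$) is routine.
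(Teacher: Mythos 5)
Your proof is correct, but it takes a genuinely different route from the paper, and your stated reason for avoiding the more direct route is mistaken. You argue in LaSalle style: $M_t := \|\fu_t\|_\infty$ is non-increasing, hence converges to some $L \ge 0$; compactness yields a limit point $\fu^*$ with $\|\fu^*\|_\infty = \|\Phi(\fu^*)\|_\infty = L$; and the strict inequality $\sigma(\cdot) < 1$ at the (finite) limit point forces $L = 0$. Every step checks out. The paper instead observes that after one iteration the orbit lies in $(-1,1)^d$, so the gate pre-activations $(U_\theta \fu_t + b_\theta)(i)$ are uniformly bounded by some finite $C$ along the orbit, whence $|\fu_t(i)| \le \sigma(C)\,|\fu_{t-1}(i)|$ with a \emph{single} contraction factor $\sigma(C) < 1$. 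Your worry that ``the per-step contraction factor is not uniformly bounded away from $1$ on the whole box'' is unfounded: the pre-activation is a continuous function of $\fu$ on a compact box, so it attains a finite maximum there and $\sigma$ of it \emph{is} uniformly bounded below $1$ on the orbit --- this is precisely the observation the paper exploits, and the same observation would have let you close your own argument in one line. The trade-off between the two proofs is that the paper's version delivers an explicit geometric rate $|\fu_t(i)| \le \sigma(C)^t |\fu_0(i)|$ (which is then reused quantitatively in the multi-layer estimate \eqref{multi}), whereas your compactness argument establishes convergence without any rate; on the other hand, your argument is the more robust template when a uniform contraction genuinely is unavailable.
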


\vspace{-.3cm}

\begin{proof}
From the definition of $\Phi$ we clearly have that the sequence defined by $\fu_{t+1}=\Phi(\fu_t)$ satisfies $-1<\fu_t(i)<1$ for all $t$ and all $i$.
Since the sequence $\fu_t$ is  bounded, so is the sequence  ${\bf v}_t:=U_{\theta} \fu_t + b_{\theta}$. That is there exists a finite $C>0$ such that
$
(U_{\theta} \fu_t)(i) + b_{\theta}(i)<C
$
 for all $t$ and $i$. Using   the non-expansivity of the hyperbolic tangent, we then obtain that
 $
 |\fu_{t}(i)| \le  \sigma(C) |\fu_{t-1}(i)| 
 $,
 for all $t$ and all $i$. We  conclude by noting that $0<\sigma(C)<1$.
\end{proof}

Lemma \ref{zero} remains true for a multi-layer CFN, that is, a CFN in which the first layer is defined by \eqref{cfn1} and the subsequent layers $2 \leq \ell \leq L$ are defined by: 
\begin{equation*} \label{cfnmulti}
h^{(\ell)}_{t} = \theta^{(\ell)}_{t} \odot \tanh(h^{(\ell)}_{t-1} ) + \eta^{(\ell)}_{t} \odot \tanh(W^{(\ell)} h^{(\ell-1)}_{t} ).
\end{equation*}
Assume that  $Wx_t=0$ for all $t>T$, then an extension of the arguments contained  in the proof of the two previous lemmas shows that
\begin{equation} \label{multi}
|h^{ (\ell) }_{T+k}| \leq C(1+k)^{(\ell-1)} \Theta^k
\end{equation}
where  $0<\Theta<1$ is the maximal values for the input gates involved in layer $1$ to $\ell$ of the   network,  and  $C>0$ is some constant depending only on the norms $\|W^{(j)}\|_{\infty}$ of the  matrices and the sizes $|h^{(j)}_{T}|$ of the initial conditions at all previous $1 \leq j \leq \ell$ levels. Estimate \eqref{multi} shows that Lemma \ref{zero} remains true for multi-layer architectures.

\begin{figure}[h]
\centering
\subfigure[First layer]{\includegraphics[height=1.5in,width=2.2in]{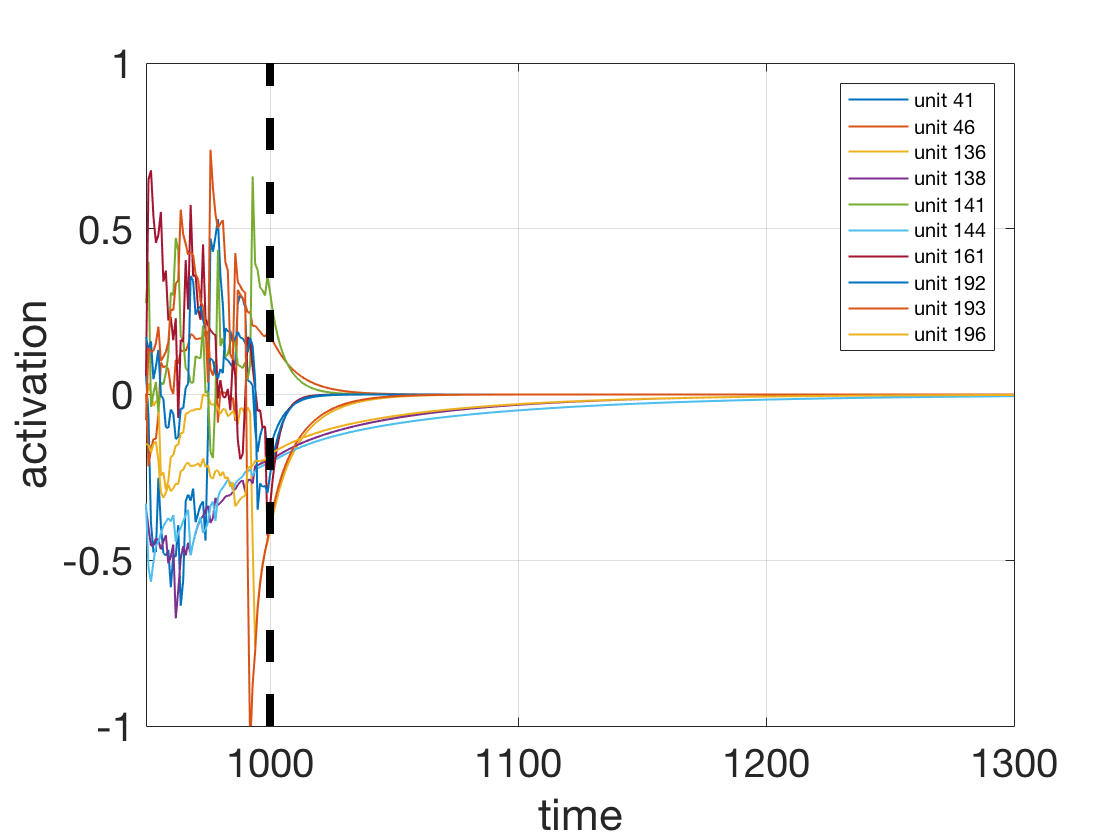}} \qquad 
\subfigure[Second layer]{ \includegraphics[height=1.5in,width=2.2in]{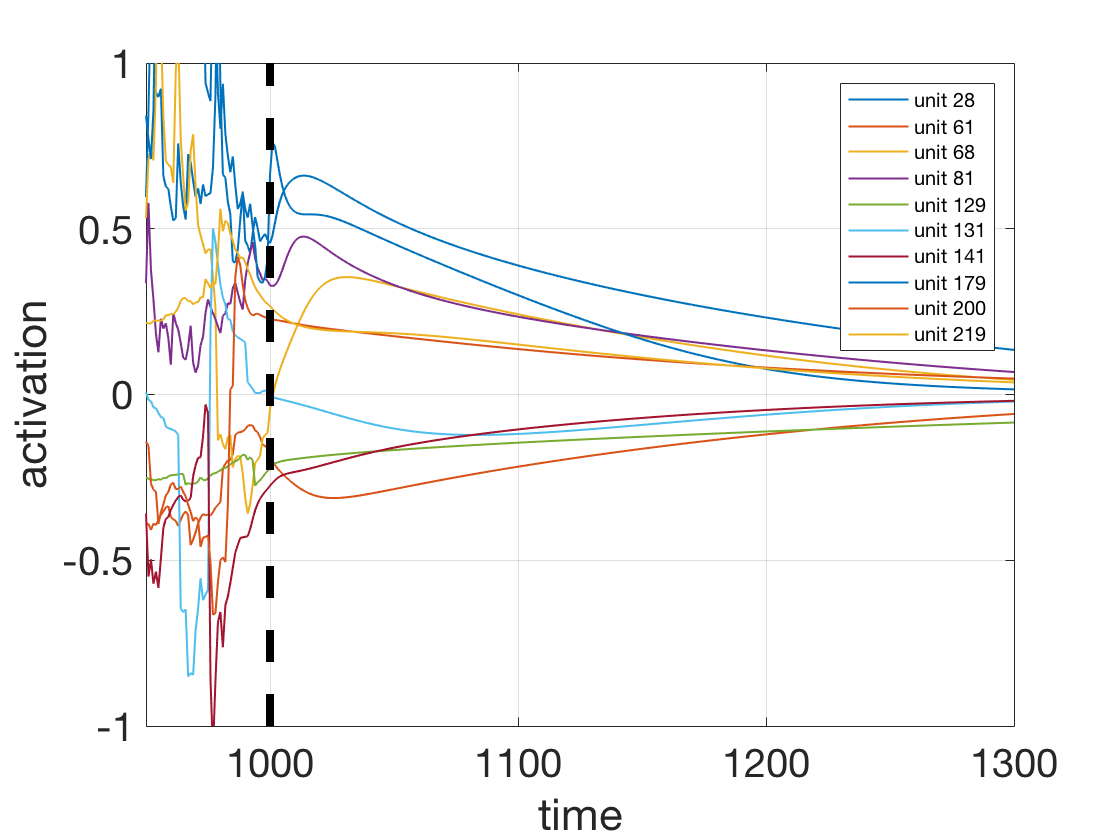}}
\caption{A 2-layer, 224-unit CFN trained on Penn Treebank. All inputs $x_t$ are zero after $t=1000,$ i.e. the time-point indicated by the dashed line. At left: plot of the 10 ``slowest" units of the first layer. At right: plot of the 10 slowest units of the second layer. The second layer retains information much longer than the first layer.  }
 \label{fig:relaxing_rate}
\end{figure}
Inequality \eqref{multi} shows that higher levels (i.e. larger $\ell$)   decay more slowly, and remain non-trivial, while earlier levels (i.e. smaller $\ell$) decay more quickly. We illustrate this behavior computationally with a simple experiment. We take a 2-layer, 224-unit CFN network trained on Penn Treebank and feed it the following input data: The first 1000 inputs $x_t$ are the first 1000 words of the test set of Penn Treebank; All subsequent inputs are zero. In other words, $x_t=0$ if $t>1000$. For each of the two layers we then select the 10 units that decay the slowest after $t>1000$ and plot them on Figure \ref{fig:relaxing_rate}. The first layer retains information for about 10 to 20 time steps, whereas the second layer retains information for about 100 steps. This experiment conforms with the analysis \eqref{multi}, and indicates that adding a third or fourth layer would potentially allow a multi-layer CFN architecture to retain information for even longer periods.


%

\section{Experiments}
In this section we show that despite its simplicity, the CFN network achieves performance comparable to the much more complex LSTM network on the word level language modeling task. We use two datasets for these experiments, namely  the Penn Treebank corpus \citep{marcus1993building} and the Text8 corpus \citep{mikolov2014learning}. We consider both one-layer and two-layer CFNs and LSTMs for our experiments. We train both CFN and LSTM networks in a similar fashion and always compare models that use the same number of parameters. We compare their performance with and without dropout, and show that in both cases they obtain similar results. We also provide results published in \cite{mikolov2014learning}, \cite{empirical_exploration_ICML15} and \cite{sukhbaatar2015end} for the sake of comparison.

For concreteness, the exact implementation for the two-layer architecture of our model is
\begin{align} \label{bobo}
  h^{(0)}_t& = W^{(0)} x_t \nonumber \\
 \hat h^{(0)}_t&=\text{Drop}(h^{(0)}_{t},p) \nonumber \\
h^{(1)}_{t} &= \theta^{(1)}_{t} \odot \tanh(h^{(1)}_{t-1} ) + \eta^{(1)}_{t} \odot \tanh(W^{(1)} \hat h^{(0)}_{t} )  \nonumber \\
 \hat h^{(1)}_t&=\text{Drop}(h^{(1)}_{t},p) \nonumber \\
h^{(2)}_{t} &= \theta^{(2)}_{t} \odot \tanh(h^{(2)}_{t-1} ) + \eta^{(2)}_{t} \odot \tanh(W^{(2)} \hat h^{(1)}_{t} )  \nonumber \\
 \hat h^{(2)}_t&=\text{Drop}(h^{(2)}_{t},p) \nonumber \\
y_t\;\;&=\text{LogSoftmax}(W^{(3)} \hat h^{(2)}_t+b)
\end{align}
 where $\text{Drop}(z,p)$ denotes the dropout operator with a probability $p$ of setting components in $z$ to zero. We compute the gates according to
 \begin{align} 
 &\theta^{(\ell)}_{t} := \sigma\left( U^{(\ell)}_{\theta}  \tilde h^{(\ell)}_{t-1} + V^{(\ell)}_{\theta}  \tilde h^{(\ell-1)}_{t} + b_{\theta} \right) \;\; 
\text{and} \; \;
\eta^{(\ell)}_{t} := \sigma\left( U^{(\ell)}_{\eta}  \tilde h^{(\ell)}_{t-1} + V^{(\ell)}_{\eta}  \tilde h^{(\ell-1)}_{t} + b_{\eta} \right)  \\
& \text{where } \quad  \tilde h^{(\ell)}_{t-1}=\text{Drop}(h^{(\ell)}_{t-1},q) \quad \text{and} \quad   \tilde h^{(\ell-1)}_{t}=\text{Drop}(h^{(\ell-1)}_{t},q),\label{bibi}
\end{align}
and thus the model has two dropout hyperparameters. The parameter $p$ controls the amount of dropout between layers; the parameter $q$ controls the amount of dropout inside each gate. We use a similar dropout strategy for the LSTM, in that all sigmoid gates $f,o$ and $i$ receive the same amount $q$ of dropout.

\begin{table}[t]
\small
 \centering
 
  \caption{Experiments on Penn Treebank without dropout.}
  
  \
  
\begin{tabular}{|  lll |c c |}
\hline
 Model& Size &Training& Val. perp.  & Test perp. \\
\hline
Vanilla RNN & 5M parameters  & \cite{empirical_exploration_ICML15} & - &   122.9 \\
 GRU & 5M parameters &\cite{empirical_exploration_ICML15} & - &  108.2 \\
  LSTM  & 5M parameters   &\cite{empirical_exploration_ICML15}& -  &  109.7 \\
 \hline
 LSTM (1 layer) & 5M parameters  & Trained by us & 108.4    & 105.1   \\
 CFN (2 layers)& 5M parameters & Trained by us   & 109.3  & 106.3  \\
 \hline
      \end{tabular}
      \label{PTB5}
\end{table}

\begin{table}[t]
 \centering
 \small
 \caption{Experiments on  Text8 without dropout}
 
 \

\begin{tabular}{|  lll |  c|c |}
\hline
 Model&   Size & Training  & Perp. on development set  \\
\hline
Vanilla RNN  & 500 hidden units & \cite{mikolov2014learning} & 184 \\
SCRN   & 500 hidden  units & \cite{mikolov2014learning} &161 \\
LSTM  & 500  hidden units & \cite{mikolov2014learning} &156  \\
MemN2N & 500 hidden units  & \cite{sukhbaatar2015end}  & 147 \\
 \hline
  LSTM (2 layers)   & 46.4M parameters  &Trained by us   & 139.9    \\
 CFN (2 layers) &  46.4M parameters &  Trained by us   & 142.0   \\
 \hline
      \end{tabular}
\label{text8}
\end{table}

To  train the CFN and LSTM networks, we use a simple online steepest descent  algorithm. We update the weights $w$ via
\begin{equation}
  w^{(k+1)} = w^{(k)} - \text{lr} \cdot   \vec p   \quad  \text{ where }  \quad   \vec p = \frac{\nabla_w L}{\|\nabla_w L\|_2}  \label{sngd},
\end{equation}
and $\nabla_w L$ denotes  the approximate gradient of the loss with respect to the weights as estimated from a certain number of presented examples. We use the usual backpropagation through time approximation when estimating the gradient: we unroll the net $T$ steps in the past and neglect longer dependencies. In all experiments, the CFN and LSTM networks are unrolled for $T=35$ steps and we take minibatches of size $20$.
  In the case of an exact gradient, the update \eqref{sngd} simply corresponds to making a step of length $\text{lr}$ in the  direction of steepest descent. As all search directions $\vec{p}$ have Euclidean norm $\| \vec{p} \|_{2} = 1$, we perform no gradient clipping during training. 

We initialize all the weights in the CFN, except for the bias of the  gates, uniformly at random in $[-0.07,0.07]$. We initialize the bias $b_{\theta}$ and $b_\eta$ of the  gates to $1$ and $-1,$ respectively, so that at the beginning of the training
 $$
 \theta_t \approx \sigma(1)\approx 0.73  \quad \text{ and } \quad  \eta_t \approx \sigma(-1) \approx 0.23.
 $$
We initialize the weights of the LSTM in exactly the same way; the bias for the forget and input gate are initialized to $1$ and $-1$, and all the other weights are initialized uniformly in $[-0.07,0.07]$. This initialization scheme favors the flow of information in the horizontal direction.  The importance of a careful initialization of the forget gate was first pointed out in  \cite{gers2000learning} and further emphasized in 
 \cite{empirical_exploration_ICML15}. Finally, we initialize all hidden states to zero for both models.

{\bf Dataset Construction.} The Penn Treebank Corpus has 1 million words and a vocabulary size of 10,000.
We used the code from \cite{zaremba2014recurrent} to construct and split the dataset into a training set (929K words), a validation set (73K words) and a test set (82K words). The Text8 corpus  has 100 million characters and a vocabulary size of 44,000. We used the script from \cite{mikolov2014learning} to construct and split the dataset into a training set (first 99M characters) and a development set (last 1M characters).

{\bf Experiments without Dropout.} 
Tables \ref{PTB5} and \ref{text8} provide a comparison of various recurrent network architectures without dropout  evaluated on the Penn Treebank corpus and the Text8 corpus. The last two rows of each table provide results for  LSTM and CFN networks trained and initialized in the manner described above. We have tried both one and two layer architectures, and reported only the best result. The  learning rate schedules used for each network are described in the appendix.

We also report results published in \cite{empirical_exploration_ICML15} were a vanilla RNN, a GRU and an LSTM network were trained on Penn Treebank, each of them having 5 million parameters (only the test perplexity was reported). 
Finally we report results published in \cite{mikolov2014learning} and \cite{sukhbaatar2015end} where various networks are trained on Text8.
Of these four networks, only the LSTM network from \cite{mikolov2014learning}  has the same number of parameters than the CFN and LSTM networks we trained (46.4M parameters). The vanilla RNN, Structurally Constrained Recurrent Network (SCRN) and End-To-End Memory Network (MemN2N) all have 500 units, but less than 46.4M parameters. We nonetheless indicate their performance in Table \ref{text8} to provide some context.



\begin{table}[t]
 \centering
 \small
 \caption{Experiments on Penn Treebank with dropout.}
 
 \
 
\begin{tabular}{| lll  |c c |}
\hline
 Model&Size& Training & Val. perp.  & Test perp. \\
\hline
Vanilla RNN& 20M parameters & \cite{empirical_exploration_ICML15} & 103.0 &   97.7 \\
 GRU & 20M parameters &\cite{empirical_exploration_ICML15} & 95.5 &  91.7 \\
  LSTM  & 20M parameters  &\cite{empirical_exploration_ICML15}& 83.3  &  78.8 \\
 \hline
 LSTM (2 layers)  & 20M parameters & Trained by us & 78.4   & 74.3  \\
 CFN (2 layers) & 20M parameters   &Trained by us &  79.7 & 74.9 \\
 \hline
 LSTM (2 layers)  & 50M parameters & Trained by us & 75.9   & 71.8  \\
 CFN (2 layers) & 50M parameters   &Trained by us &  77.0 & 72.2 \\
 \hline
      \end{tabular}
\label{PTB20}
\end{table}
{\bf Experiments with Dropout.}
 Table \ref{PTB20} provides a comparison of various recurrent network architectures with dropout evaluated on the Penn Treebank corpus. The first three rows report results published in \citep{empirical_exploration_ICML15} and the last four rows  provide results for  LSTM and CFN networks trained and initialized with the strategy previously described. The dropout rate $p$ and $q$ are chosen as follows: For the experiments with 20M parameters,  we set $p=55\%$ and $q=45\%$ for the CFN and  $p=60\%$ and $q=40\%$ for the LSTM; For the experiments with 50M parameters, we set  $p=65\%$ and $q=55\%$ for the CFN and $p=70\%$ and $q=50\%$ for the LSTM.
 
 \section{Conclusion}
 
 Despite its simple dynamics, the CFN obtains results that compare well against LSTM networks and GRUs on word-level language modeling.  This indicates that it might be possible, in general, to build RNNs that perform well while avoiding the intricate, uninterpretable and potentially chaotic dynamics  that can occur in LSTMs and GRUs. Of course, it remains to be seen if dynamically simple RNNs such as the proposed CFN can perform well on a wide variety of tasks, potentially requiring  longer term dependencies than the one needed for word level language modeling. The experiments presented in Section 2 indicate a plausible path forward --- activations in the higher layers of a multi-layer CFN decay at a slower rate than the activations in the lower layers. In theory, complexity and 
 long-term dependencies can therefore  be captured using a more ``feed-forward'' approach (i.e. stacking layers) rather than relying on the intricate and hard to interpret dynamics of an LSTM or a GRU. 

 Overall, the CFN is a simple model and it therefore has the potential of being mathematically well-understood. In particular, Section 2 reveals that the dynamics of its hidden states are inherently more interpretable than those of an LSTM. The mathematical analysis here provides a few key insights into the network, in both the presence and absence of input data, but obviously  more work is needed before a complete picture can emerge. We hope that this investigation opens up new avenues of inquiry, and that such an understanding will drive subsequent improvements. 
%
%

\bibliography{bib_DL}
\bibliographystyle{iclr2017_conference}

\newpage

 \section*{Appendix}
 {\bf Strange attractor of the H\'enon map.}
For the sake of comparison, we provide in Figure \ref{fig:attbis} a depiction of a well-known strange attractor (the H\'enon attractor) arising from a discrete-time dynamical system. We generate these pictures by reproducing the numerical experiments from \cite{henon1976two}. The discrete dynamical system considered here is the two dimensional map
$$
x_{t+1}=y_{t}+1-ax^2_{t}, \quad y_{t+1}=bx_{t},
$$ 
with  parameters  set to $a= 1.4$ and $b = 0.3$. We obtain Figure  \ref{fig:attbis}(a) by choosing the initial state $(x_0,y_0)=(0,0)$  and plotting the  iterates $(x_{t}, y_{t})$ for $t$ between $10^3$ and $10^5$. All trajectories starting  close to the origin at time $t=0$   converge toward the depicted attractor. Successive zooms on the branch of the attractor reveal its fractal nature. The structure repeats in a fashion remarkably similar to the 2-unit LSTM in Section 2.
 \begin{figure}[h]
\centering
\subfigure{\includegraphics[height=1.75in,width=2.2in]{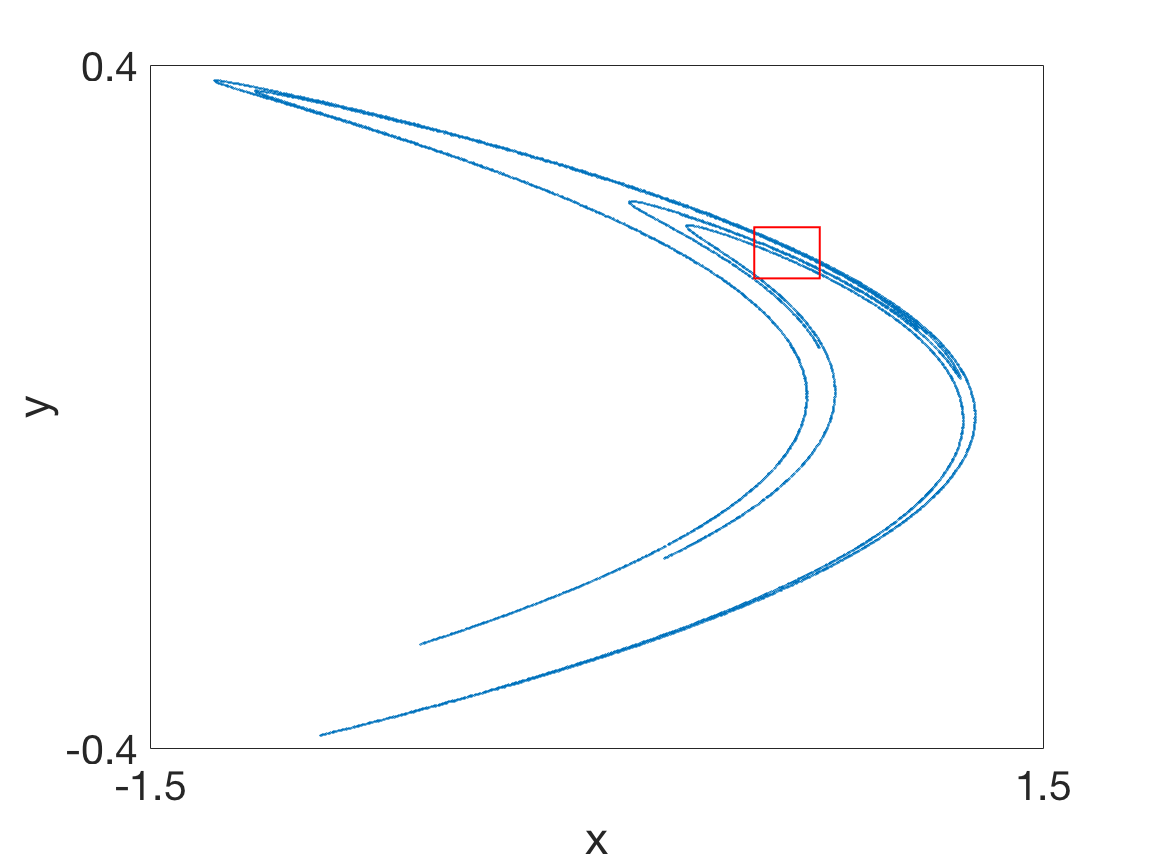}}  \hspace{0.01in} 
  \subfigure{\includegraphics[height=1.5in,width=1.5in]{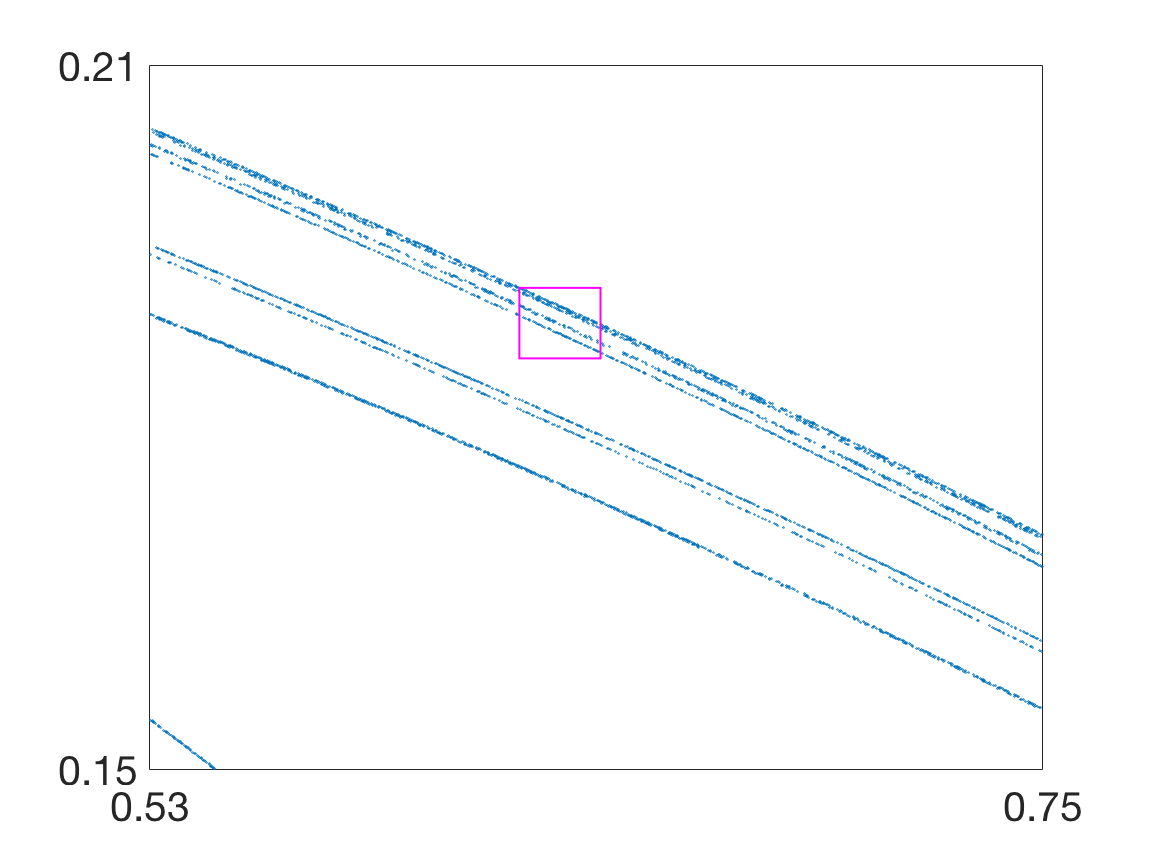}} \hspace{0.01in} 
     \subfigure{\includegraphics[height=1.5in,width=1.5in]{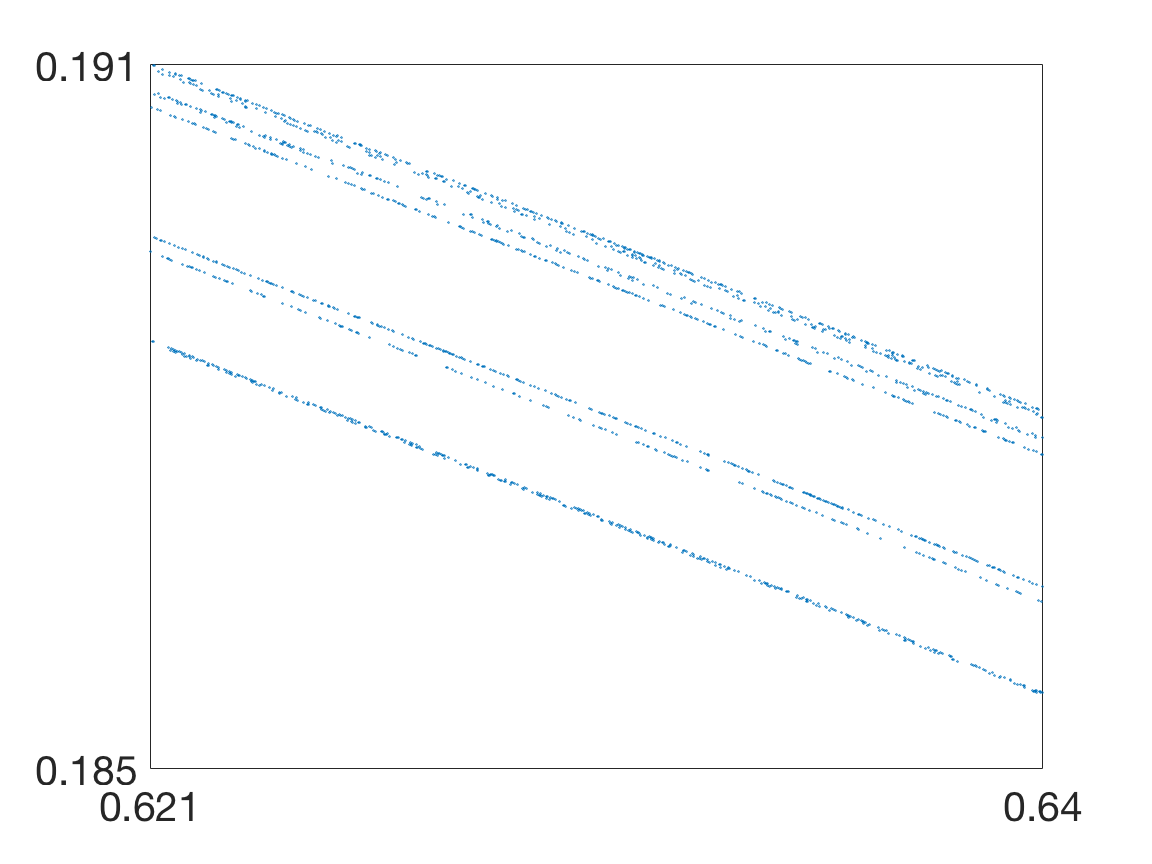}}\\
\caption{Strange attractor of the H\'enon map. From left to right: The H\'enon attractor,  enlargement of the red box,  enlargement of the magenta box.
}
\label{fig:attbis}
\end{figure}

 {\bf Strange attractor of a 2-unit GRU.}
As with LSTMs, the GRU gated architecture can induce a chaotic dynamical system. Figure \ref{fig:GRU} depicts the strange  attractor of the dynamical system
\begin{align*}
\fu_t = h_t, \qquad &\fu \mapsto \Phi(\fu) := (1 - z) \odot \fu + z \odot \tanh\left( U(r \odot \fu)  \right) \\
&z := \sigma\left( W_{z} \fu + b_{z} \right) \quad r := \sigma\left( W_{r} \fu + b_{r} \right) \nonumber,
\end{align*}
induced by a two-dimensional GRU, with weight matrices 
$$
W_{z} =
\begin{bmatrix}
     0    & 1 \\
     1    & 1
\end{bmatrix}
\quad
W_{r} = 
\begin{bmatrix}
0   &  1 \\
     1  &   0
\end{bmatrix}
\quad 
U =
\begin{bmatrix}
    -5   &  -8 \\
     \;\,\,8    & \;\,\,5
\end{bmatrix}
$$
and zero bias for the model parameters. Here also successive zooms on the branch of the attractor reveal its fractal nature. As in the LSTM, the forward trajectories of this dynamical system exhibit a high degree of sensitivity to initial states.

   \begin{figure}[h]
\subfigure{\includegraphics[height=1.6in,width=2.2in]{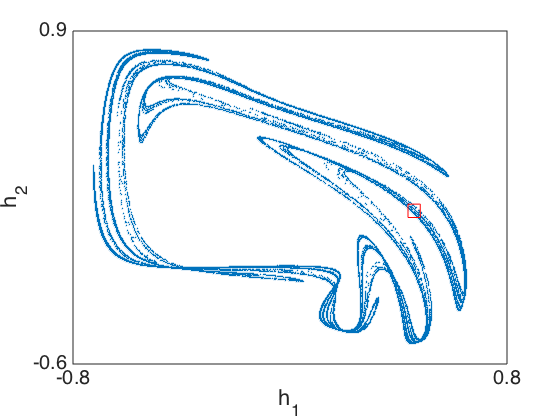}}
 \hspace{0.01in} 
 \subfigure{\includegraphics[height=1.5in,width=1.6in]{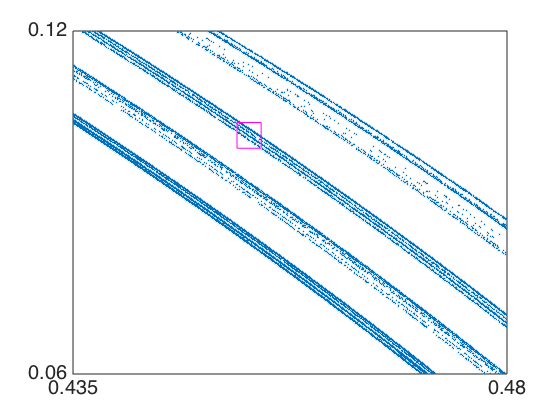}} 
\hspace{0.01in}
 \subfigure{\includegraphics[height=1.5in,width=1.4in]{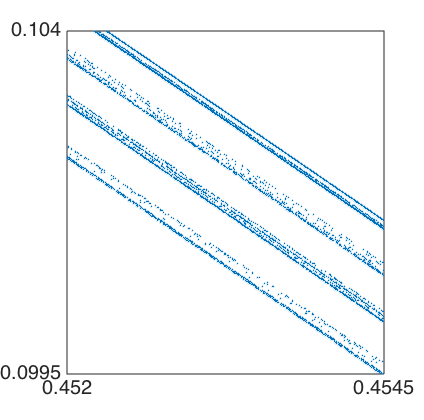}} 
\caption{Strange attractor of a two-unit GRU.
 Successive zooms reveal the fractal nature of the attractor.}
\label{fig:GRU}
\end{figure}


%
%
%
   
   \newpage
 
{\bf Network sizes and  learning rate schedules  used in the experiments.}
  In the Penn Treebank experiment without dropout (Table 1),
the CFN network has two hidden layers of 224 units each for a total of 5 million parameters. The LSTM has one hidden layer with 228 units for a total of 5 million parameters as well. We also tried a two-layer LSTM with 5 million parameters but the result was worse (test perplexity of 110.6) and we did not report it in the table. 
For the Text8 experiments (Table 2), the LSTM has two hidden layers with  481 hidden units for a total 46.4 million parameters. We also tried a one-layer LSTM with 46.4 million parameters but the result was worse
(perplexity of 140.8). The CFN has two hidden layers with 495 units each, for a total of 46.4 million parameters as well. 

For both experiments without dropout (Table 1 and 2), we used a simple and aggressive learning rate schedule: at each epoch, lr is divided by 3.
For the CFN the initial learning rate was chosen to be  $\text{lr}_0=5.5$ for PTB and $\text{lr}_0=5$ for Text8.
 For the LSTM we chose $\text{lr}_0=7$ for PTB and $\text{lr}_0=5$ for Text8.

 In the Penn Treebank experiment with dropout (Table 3),
 the CFN with 20M parameters  has two hidden layers of 731 units each and the LSTM with 20M parameters  trained by us has two hidden layers of 655 units each. We also tried a one-layer LSTM with 20M parameters and it led to similar but slightly worse results than the two-layer architecture. For both network, the learning rate was divided  by $1.1$ each time the validation perplexity did not decrease by at least $1\%$. 
The initial learning rate were chosen to be  $\text{lr}_0=7$  for the CFN and  $\text{lr}_0=5$ for the LSTM.

%
%

\end{document}